\newtheorem{defn}{Definition} 
\newtheorem{prop}{Proposition}
\title{Planning for Attacker Entrapment in Adversarial Settings}
\author {
    Brittany Cates\textsuperscript{\rm 1},
    Anagha Kulkarni\textsuperscript{\rm 2},
    Sarath Sreedharan\textsuperscript{\rm 1}
}
\begin{document}
\newcommand{\Shortcite}[1] {\citeauthor{#1}~\shortcite{#1}}
\newcommand{\ShortciteManySameLead}[2] {\citeauthor{#1}~\shortcite{#1,#2}}

\newcommand{\Attacker}              {\ensuremath{\mathcal{A}}}
\newcommand{\Defender}              {\ensuremath{\mathcal{D}}}
\newcommand{\AModel}              {\ensuremath{\mathcal{M}^{\Attacker}}}
\newcommand{\DModel}              {\ensuremath{\mathcal{M}^{\mathcal{D}}}}
\newcommand{\MZ}              {\ensuremath{\mathcal{M}^{0}}}

\newcommand{\AStates}              {\ensuremath{S^{\Attacker}}}
\newcommand{\AActs}              {\ensuremath{A^{\Attacker}}}
\newcommand{\ATran}              {\ensuremath{T^{\Attacker}}}
\newcommand{\ARe}              {\ensuremath{R^{\Attacker}}}
\newcommand{\ADisc}              {\ensuremath{\gamma^{\Attacker}}}
\newcommand{\AVal}              {\ensuremath{\mathcal{V}^{\Attacker}}}
\newcommand{\AValStar}              {\ensuremath{\mathcal{V}^{\Attacker*}}}
\newcommand{\AQVal}              {\ensuremath{\mathcal{Q}^{\Attacker}}}
\newcommand{\AQValStar}              {\ensuremath{\mathcal{Q}^{\Attacker*}}}
\newcommand{\QVal}              {\ensuremath{\mathcal{Q}}}
\newcommand{\QValStar}              {\ensuremath{\mathcal{Q}^*}}

\newcommand{\DStates}              {\ensuremath{S^{\Defender}}}
\newcommand{\DActs}              {\ensuremath{A^{\Defender}}}
\newcommand{\DTran}              {\ensuremath{T^{\Defender}}}
\newcommand{\DRe}              {\ensuremath{R^{\Defender}}}
\newcommand{\DDisc}              {\ensuremath{\gamma^{\Defender}}}
\newcommand{\DVal}              {\ensuremath{\mathcal{V}^{\Defender}}}
\newcommand{\DValStar}              {\ensuremath{\mathcal{V}^{\Defender*}}}
\newcommand{\DQVal}              {\ensuremath{\mathcal{Q}^{\Defender}}}
\newcommand{\DQValStar}              {\ensuremath{\mathcal{Q}^{\Defender*}}}

\newcommand{\Xsays}[2] {\textcolor{purple}{\textbf{#1 says:} \emph{#2}}}
\newcommand{\Ssays}[1] {\Xsays{Sarath}{#1}}

\newcommand{\todo}[1] {\textcolor{red}{#1}}
\newcommand{\newicaps}[1] {\textcolor{red}{#1}}

\newcommand{\update}[1] {\textcolor{magenta}{#1}}

\maketitle

\begin{abstract}
In this paper, we propose a planning framework to generate a defense strategy against an attacker who is working in an environment where a defender can operate without the attacker’s knowledge. The objective of the defender is to covertly guide the attacker to a trap state from which the attacker cannot achieve their goal.  Further, the defender is constrained to achieve its goal within $K$ number of steps. This $K$ is calculated as a pessimistic lower bound within which the attacker is unlikely to suspect a threat in the environment. Such a defense strategy is highly useful in real-world systems like honeypots or honeynets, where an unsuspecting attacker interacts with a simulated production system while assuming it is the actual production system. Typically, the interaction between an attacker and a defender is captured using game theoretic frameworks. Our problem formulation allows us to capture it as a much simpler infinite-horizon discounted Markov Decision Process (MDP), in which the optimal policy for the MDP represents the defender's strategy against the actions of the attacker. Through empirical evaluation, we show the merits of our problem formulation.

\begin{figure*}[ht!]
    \centering
    \includegraphics[width=0.60\textwidth]
    {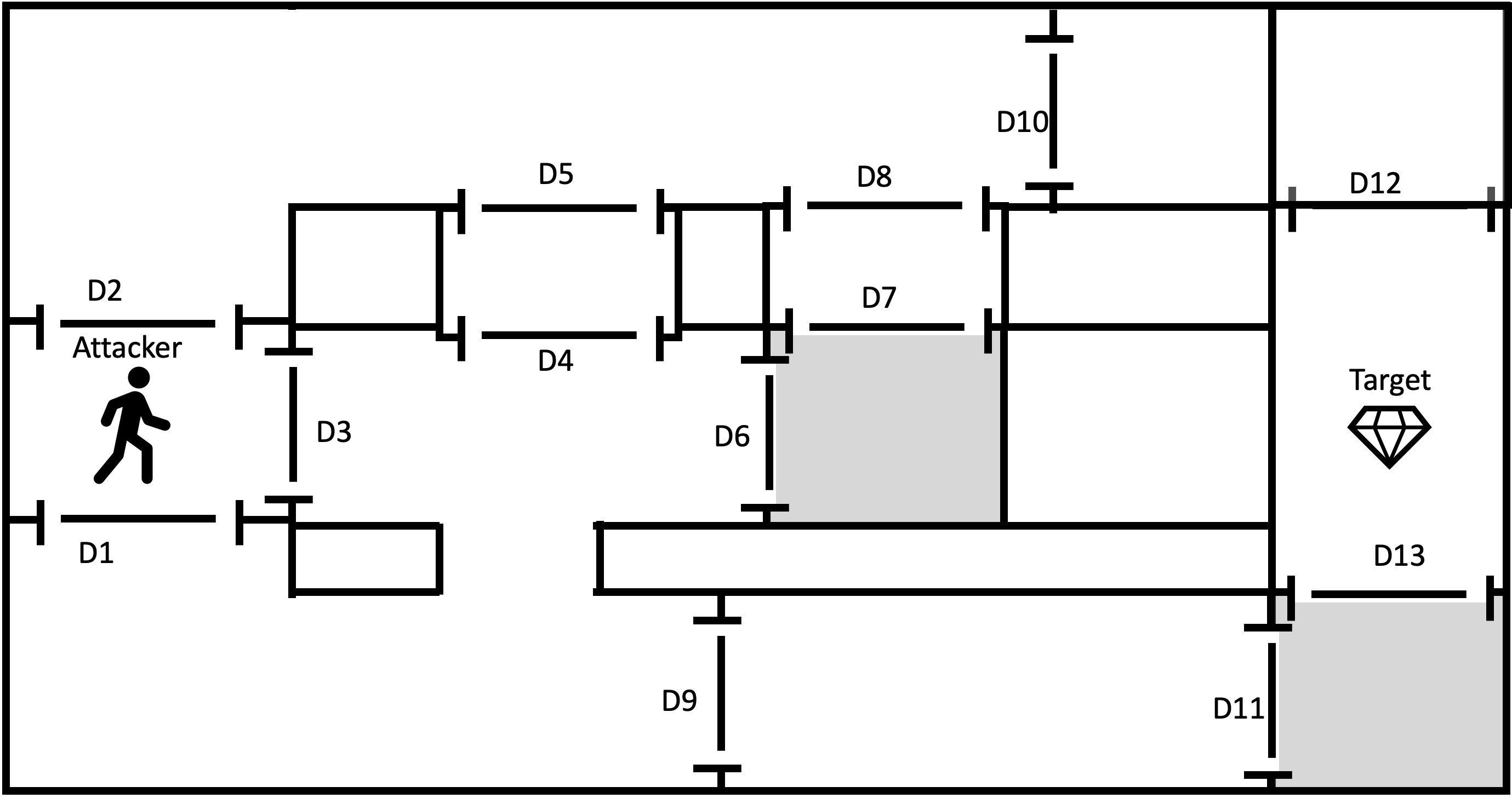}
    \caption{A diagram illustrating the running example. Here we see an attacker trying to break into a specific room to steal an object. The defender's goal is to get the attacker to enter one of the trap rooms highlighted in grey by controlling which of the automatic doors open when the attacker enters a room.}
    \label{fig:map}
\end{figure*}

\end{abstract}

\section{Introduction}
Adversarial planning is a topic that has received significant attention in recent years \cite{masters2021extended, bernardini2020optimization, chakraborti2019explicability}. It has been considered in a diverse range of settings, from simple path planning \cite{masters2017deceptive} to more realistic cybersecurity problems \cite{letchford2013optimal}. These works have not only provided the community with an opportunity to showcase the effectiveness of modern planning tools but to develop new planning paradigms better suited to handle such cases (for example, the Stackelberg planning formulation \cite{speicher2018stackelberg}). An underlying theme in many of the above-mentioned examples is that the problem is framed as essentially game-theoretic in nature. Through this paper, we hope to introduce a new class of adversarial planning problems of practical import, which lends itself to a simple and intuitive planning formulation.

In this paper, we introduce the {\em attacker entrapment problem}, which involves a hidden defender manipulating its environment to undermine an attack. We will conceptualize this as a problem of leading an attacker, oblivious to the defender’s presence, to predetermined trap states. The effectiveness of the defender relies on its ability to keep its presence hidden from the attacker. As long as the attacker is unaware of the defender’s existence, the defender can rely on basic planning techniques to identify the defender's strategies.

While this may seem like an unrealistic setting, it is in fact a widely studied problem in the domain of cybersecurity. Specifically, honeypots are a strategy often used in real-world networks to address attacker entrapment problems. As the name suggests, honeypots are used to lure and trap black hat attackers by simulating a real production system that the attackers can penetrate and interact with. Honeypots can be used as a defense mechanism to divert the attacker from a real system and trap them, or they can be used to study an attacker's strategies in order to develop stronger security mechanisms. 

It is worth noting that current strategies for developing honeypots are completely offline processes. The network designer is tasked with organizing a system so that honeypots corresponding to attractive targets are placed in a potential attacker’s path. From the perspective of an attacker entrapment problem, this represents a mechanism or environment design solution \cite{keren2021goal}. We instead propose a novel, active solution, which uses a planner to adapt to potential actions taken by the attacker. 

We will capture the planning problem of the defender as a Markov Decision Process (MDP). While concealing its presence, the defender will use the attacker’s beliefs about the system architecture to predict the attacker's strategy and lead the attacker to a predetermined trap state. It will mask its own actions using the attacker’s beliefs about the stochasticity of the domain. Unbeknownst to the attacker, the defender can control the outcome of certain attacker actions. For example, the defender could decide what password should be returned to the attacker in response to an SQL injection attack, or when considering the security of a physical space, which automatic door to open when the attacker reaches a room. By carefully selecting the outcomes, the defender can lead the attacker to a trap state or failing that, lead them to a state with lower utilities.

In addition to introducing this problem framework and a solution technique, we will introduce a way to calculate a budget for defender actions. We will show how such a budget allows the defender to manipulate outcomes without arousing the attacker’s suspicion that the environment is being controlled. We will evaluate our solution on several traditional MDP benchmarks and present the computational characteristics of our method.

To summarize, the contributions of this paper include:
\begin{enumerate}
    \item Identification and formalization of the attacker entrapment problem.
    \item Formulation of a solution to the attacker entrapment problem as an MDP planning problem.
    \item Development of search-based methods to identify a lower bound on the defender budget that is guaranteed to avoid detection from the attacker.
    \item Evaluation of the effectiveness of solution methods in reducing the attacker values on a number of standard MDP benchmarks.
\end{enumerate}
\section{Motivating Example}

As a running example, we consider a simple scenario involving the security of a small showroom. The attacker takes the form of a thief, who is attempting to steal a valuable diamond from the showroom. The thief has access to a map of the showroom and knows the exact location of the valuable item. The only source of stochasticity within this domain is a set of automatic doors spread throughout the showroom. The attacker is under the impression that whenever they enter a room with multiple doors, one of the doors will open with uniform probability. However, unbeknownst to the attacker, there exists an automated defender agent that can control which doors will open. The objective of the defender is to choose the doors which will lead the attacker into one of the trap states. While the attacker knows these rooms exist, they do not know that these are trap states. 

Let Figure \ref{fig:map} correspond to the layout of the showroom and the current position of the attacker,  with trap states highlighted in grey. The defender could choose to open door D3. If the attacker is a rational agent, they would simply follow the corridor until they enter the trap state. However, this may not be the case. They may wander into the lower corridor instead. The defender would need to adapt by opening door D11, leading the attacker to another trap state. The defender will have to keep track of where the attacker may be going and what they may be doing, so it can respond in turn and achieve the desired outcome. In addition to directing the attacker into one of the trap states, the defender may also have other considerations. For example, some of these hallways might be lined with expensive items that the attacker could damage. To avert such costly risks, the defender can force the attacker to use paths that minimize the potential damage. 

\section{Background}
We capture the planning problem as an infinite horizon discounted Markov Decision Process, or MDP \cite{puterman1990markov}. Keeping with standard notations, we will represent this MDP as a tuple of the form  $\mathcal{M} = \langle S, A, T, R, \gamma, S_0\rangle$, where $S$ is the set of states, $A$ is the set of actions, $T: S \times A \times S \rightarrow [0,1]$ is the transition function, $R: S \times A \times S \rightarrow \mathbb{R}$ is the reward function, $\gamma \in [0,1)$ is the discount factor, and $S_0$ is the initial state for the task. The solution concept used in this class of MDPs corresponds to stationary deterministic policies, henceforth referred to simply as policies. Each policy $\pi$ corresponds to a function, i.e., $\pi: S \rightarrow A$, that associates an action with each possible state. The value of a given policy $\pi$, $V^\pi: S \rightarrow \mathbb{R}$, captures the expected discounted sum of rewards obtained by executing the given policy from a state. Similarly, a $Q$-function $\QVal^\pi: S \times A \rightarrow \mathbb{R}$, captures the expected discounted sum of rewards obtained by executing a specific action in the given state and then following the policy $\pi$.
A policy is said to be optimal for $\mathcal{M}$ (denoted as $\pi^*$) if there exists no other policy $\pi'$ and state $s \in S$, such that $V^{\pi'}(s) > V^{\pi^*}(s)$. We will denote the value function corresponding to the optimal policy, using the notation $V^*$(we will similarly denote the optimal $Q$-function using the notation $\QValStar$).
We will use the term trajectory to refer to a sequence of states and actions (denoted as $\tau = \langle s_0, a_0,...,s_i\rangle$). We will use the notation $P(\tau|\mathcal{M})$, to capture the likelihood of a trajectory $\tau$ occurring under a given model $\mathcal{M}$, where we define the probability recursively as
\[P(\tau|\mathcal{M}) = T(s_0,a_0,s_1) \times P(\langle s_1, a_1,...,s_i\rangle|\mathcal{M})\]
In this paper, we will associate a different MDP with both the attacker and the defender, so we differentiate the two MDPs and their components using the superscript  $\mathcal{A}$ and $\mathcal{D}$.
\section{Our Approach}
At the heart of our approach lies the interaction between two agents, an attacker (\Attacker) and a hidden defender (\Defender). Each agent has a model of the problem that they believe they are solving. The attacker starts with an infinite horizon discounted MDP of the form $\AModel = \langle \AStates, \AActs, \ATran, \ARe, \ADisc, S^{\Attacker}_0 \rangle$, where \AStates~ are the set of states that the attacker ascribes to task, 
\AActs~ is the set of actions the attacker can perform,
\ATran~ is the attacker's belief about how the task may evolve in response to the attacker's actions, and finally, $\ARe$ and $\ADisc$ represent the attacker's internal reward function and discounting. To simplify the formulation, we will assume that the attacker's reward is always a non-negative value.
This could even be a purposefully distorted version that the defender may have leaked to the outside world. 
We will denote the value function calculated from this model as $\AVal$ and denote the optimal value as $\AValStar$ (we will similarly use $\AQVal$ for the $Q$-function). In our running example, the attacker's states consist of the various positions the attacker can be in and the actions correspond to those for moving around the showroom and finally stealing the diamond when the attacker is in the final room.

One piece of information that the defender does not have access to is the potential policy that the attacker might follow. While the defender might not be able to infer the exact policy, the defender can still generate a distribution over possible attacker actions from the model. In particular, the attacker's decision-making can be modeled using a noisy-rational model \cite{noisyrat}. This is generally a very conservative model, that still allows probabilities for non-optimal actions. It has the additional advantage of being a psychologically feasible model, and it is thus able to model cases where we might be dealing with a human attacker. For example, the use of such noisy-rational model would allow us to capture the possibility that the attacker may wander from the optimal path in the example provided in Figure \ref{fig:map}.
In particular, we will define the probability of action selection to be

\[P(a|s) \propto e^{\kappa \times \AQValStar(s,a)}, ~\textrm{Where}~s \in \AStates ~ a \in \AActs\]

Effectively, this corresponds to a probabilistic distribution where higher value actions are given more weight, and $\kappa$ is a parameter which allows us to control how likely it is that the attacker may select a non-optimal action.

While an attacker would try to maximize the expected cumulative reward they receive, the goal of the defender is to lead the attacker to one of the possible trap states, while minimizing the total reward the attacker may receive (after all, the attacker’s reward may correspond to cases where they could be destroying or generally disrupting the operations on the target environment). 
The defender has access to two types of actions. The first is a no-op action, wherein the defender does not alter the environment and allows the attacker to execute its chosen action. In the second type of action, the defender chooses which of the potential outcomes of an attacker’s action should be applied to the environment. For our running example, this corresponds to deciding which of the doors next to the attacker should open.

Furthermore, we will place a limit on the number of times the defender gets to select a specific outcome. In addition to placing limits on what the defender can do, it also reflects the fact that if the defender keeps choosing a specific outcome for a particular attacker action in a state, it may lead to the attacker being suspicious that they may be incorrect about the model. In our running example, consider the starting position for the attacker. There are three potential doors next to the attacker, and per the defender's policy, it chooses to open door D3. If the attacker decides to simply stay in that room, they would expect one of the doors to again open in the next turn.
Let's assume that the defender keeps opening the same door, regardless of how many times the attacker chooses to stay in the room. Even if the attacker attributes the first few instances to the randomness of the environment, they may start doubting their knowledge about the environment if the same outcome keeps getting selected. Such changes in beliefs could affect the attacker's behavior, which could in turn  reduce the effectiveness of the method.

Thus, we will limit the defender's actions to the first $K$ steps. Section \ref{sec:budget} will provide a more formal description of this phenomenon and how we can calculate a pessimistic lower bound on $K$. With these component definitions in place, we are now ready to define our central problem as follows:


\begin{defn}
For a given attacker operating using a model of task  $\AModel$, in an environment with a set of trap states $S_T$ where the attacker receives a reward of zero, an {attacker entrapment problem} of budget $K$ corresponds to the problem of the defender selecting a sequence of $K$ actions (either a no-op or an outcome-selection actions), that would result in a minimal cumulative reward for the attacker.
\end{defn}

Our primary argument in this paper is that given this setting, we can encode the optimal attacker entrapment problem as another MDP. Solutions to this MDP will automatically devise a policy for the defender that will look at the actions taken by the attacker and decide how the outcomes could be shaped to get the best possible result from the defender’s point of view. This new MDP would again take the form of an infinite horizon discounted MDP. 
However, now the state space consists of the attacker state, the action being followed by the attacker and the remaining budget.
As discussed, the actions involve a no-op action and a set of actions that can choose which of the outcomes to manifest. Each action (even a no-op) will also lead to a decrease in the budget. The attacker action component of the state is determined by the likelihood of the attacker selecting that action in that state (defined by $P^{\Attacker}$). All states where the budget hits zero and the ones that are trap states are treated as absorbing states. In terms of the reward function, for every non-absorbing state the defender's reward corresponds to the negative of the reward received by the attacker. For absorbing states, the defender receives zero reward and will receive the negative of the optimal value for the attacker model. This means that the system will try to drive the state to one of the trap states while trying to pass through low-reward states for the attacker. Failing that, the objective would be to reach a state with the minimal value for the attacker. Again, all the defender actions are limited by the budget.

More formally, the defender model is given as
\[\DModel_{(S_T,K)} = \langle \DStates, \DActs, \DTran, \DRe, \DDisc, S^{\Defender}_0 \rangle,\] 
where $S_T \subseteq \AStates$ is the set of trap states. Here the states are given as $\DStates = \AStates \times \AActs \times I^K$, 
such that $I^K \subseteq \mathbb{Z} $, 
and actions as $\DActs  = \{noop\} \cup {a^\Defender_s | s\in \AStates}$. Let
 $\hat{s}$ be the state to which the attacker intends to move.
For every state $(s,a,k) \in \DStates$ where $s \not\in S_T$ and $k \neq 0$, the transition probability is defined as,
\begin{align*}
    \DTran((s,a,k), noop, (s',a',k')) = \begin{cases} \ATran(s,a,s') \times P^{\Attacker}(a'|s') \\~~~~~\textrm{if}~k'=k-1\\ 0   ~\textrm{otherwise}\end{cases}
\end{align*}

For the non-no-op actions, the transition probability is defined as 
\[\DTran((s,a,k),a^\Defender_{\hat{s}}, (s',a',k')) = 
\begin{cases} 1 ~\textrm{if}~k'=k-1~\textrm{and}~\hat{s}=s'\\ 0   ~\textrm{otherwise}\end{cases}
\]
For an absorbing state, all transition probabilities would be zero. Now for the reward, for all non-absorbing states 
$\DRe((s,a,k)) = -1 \times \ARe(s)$, 
while for a state $s \in S_T, \DRe((s,a,k)) = 0$ for all $k$ and for all states where $k=0$ and the state is not a trap state 
$\DRe((s,a,0)) =-1 \times \AQValStar(s,a)$.
By allowing the reward value to be equal to the negative of the optimal $Q$-function value of the agent, we allow for the fact that once the defender stops acting in the world, the attacker is free to achieve the maximum possible value. This adds an additional incentive for the defender to drive the attacker to lower-valued states if the defender is unable to lead the attacker to a trap state.
The discount factor is kept the same as that of the attacker. One could also associate a cost with each action the defender chooses and restrict the non-no-op actions to some subset of states, but we will skip those to keep the formulation simple. 

It is worth noting that the above formulation includes the budget as part of the state, rather than considering a finite horizon MDP. This is because under a finite horizon MDP, the reward associated with a state is not dependent on the time step. However, we want to set the reward equal to the negative of the attacker's $Q$-function value in states where the defender's budget is zero. It is unclear how easily we can capture this using finite horizon MDPs.

It is also worth noting that depending on the scenarios, the ability of the defender to determine the outcome might be restricted to some set of states or actions. Or there might be some cost associated with enforcing some specific outcome. The above formulation can easily be modified to capture such consideration. However, we will go with this less restricted and simpler setting, to allow for a cleaner formulation and to potentially derive guarantees on possible defender behaviors.
In particular, we can use this formulation to establish a relationship between the optimal value function of the defender MDP and the attacker MDP.

\begin{prop}
    For any given state $s \in \AStates$, the optimal value of any corresponding state $(s,a,k)$, is given as $|\DValStar((s,a,k))| \leq \AQValStar(s,a)$, for all attacker actions $a$ and current budget value $k$.
\end{prop}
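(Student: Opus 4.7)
The plan is to first observe that $\DValStar((s,a,k)) \leq 0$ everywhere, so the absolute value bars can be stripped and the claim becomes equivalent to $\DValStar((s,a,k)) \geq -\AQValStar(s,a)$. The non-positivity is immediate from the definition of $\DRe$: on non-absorbing states the instantaneous reward is $-\ARe(s) \leq 0$ since attacker rewards are non-negative by assumption, on trap states it is $0$, and on budget-exhausted non-trap states it is $-\AQValStar(s,a) \leq 0$. From here the natural route is induction on the budget parameter $k$.

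The base case $k = 0$ is immediate: $(s,a,0)$ is absorbing by construction, so $\DValStar((s,a,0))$ equals its one-shot reward, which is either $0$ (trap state, bound trivial since $\AQValStar \geq 0$) or exactly $-\AQValStar(s,a)$ (non-trap state, bound tight). The same argument also disposes of absorbing trap states at every intermediate $k$, so only the non-absorbing case remains for the inductive step.

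For the inductive step, consider a non-absorbing $(s,a,k)$ with $k \geq 1$, write the Bellman optimality equation for the defender, and negate; using $-\max X = \min(-X)$ one obtains
\[
|\DValStar((s,a,k))| = \ARe(s) + \DDisc \min_{a^\Defender} \sum_{(s',a')} \DTran((s,a,k), a^\Defender, (s',a',k-1)) \, |\DValStar((s',a',k-1))|.
\]
Apply the inductive hypothesis to replace each $|\DValStar((s',a',k-1))|$ by the upper bound $\AQValStar(s',a')$. Because the outer operator is a $\min$ over defender actions and any specific action supplies an upper bound, evaluate at $a^\Defender = noop$, where the joint transition factorizes as $\ATran(s,a,s') \cdot P^{\Attacker}(a'|s')$. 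Since $P^{\Attacker}(\cdot \mid s')$ is a probability distribution, one has $\sum_{a'} P^{\Attacker}(a'|s') \AQValStar(s',a') \leq \max_{a'} \AQValStar(s',a') = \AValStar(s')$; plugging this back and invoking the attacker's own Bellman relation $\AQValStar(s,a) = \ARe(s) + \ADisc \sum_{s'} \ATran(s,a,s') \AValStar(s')$ (using that $\DDisc = \ADisc$) closes the induction.

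The step that requires the most care is the sign-flip bookkeeping that converts the defender's $\max$ into a $\min$ over non-negative quantities after taking absolute values, together with the observation that it suffices to control the defender's value at the single $noop$ action rather than reasoning about every outcome-selecting action individually. A secondary subtlety is ensuring the induction is genuinely well-founded: budget strictly decreases by one on every defender transition (whether $noop$ or outcome-selection), and trap states at intermediate $k$ must be peeled off separately because they short-circuit the Bellman recursion.
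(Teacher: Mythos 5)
Your proof is correct and follows essentially the same route as the paper's: strip the absolute value using non-positivity of the defender's value, reduce the claim to $\DValStar((s,a,k)) \geq -\AQValStar(s,a)$, and induct on the budget with base case $k=0$, closing the inductive step via the attacker's Bellman relation. The only (minor) difference is that you bound the defender's $\max$ by evaluating at the $noop$ action alone, which cleanly subsumes the paper's separate case analysis for outcome-selection actions, and you track the discount factor more explicitly than the paper's displayed equations do.
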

\begin{proof}[Proof]
We can easily show this relation through induction, by proving a stronger relation, namely $\DValStar((s,a,k)) \geq -1\times \AQValStar(s,a)$. This is equivalent to stating that 
 $-1\times\DValStar((s,a,k)) \leq  \AQValStar(s,a)$ and since $\DValStar(s,a) \leq 0$ (a result of the assumption about the original $\ARe$), the relation  $|\DValStar((s,a,k))| \leq \AQValStar(s,a)$ holds when you consider the absolute values.
The relation $\DValStar((s,a,k)) \geq -1\times \AQValStar(s,a)$ trivially holds when $k=0$.
Assuming that the relation holds for any state with budget $k=i$, let us consider the value of a state for budget $k=i+1$. Following the Bellman equations, the value will be defined as
\begin{align*}
   \DValStar((s,a,i+1)) =\\ \DRe((s,a,i+1)) + \\ \textrm{max}_{a_{\Defender}}(\sum_{(s',a',i)}(\DTran((s,a,i+1)), a_{\Defender}, (s',a',i))  \times \\\DValStar((s',a',i)) 
\end{align*}

Now we know that the relation holds for states with budget $i$, thus $\DValStar((s',a',i)) \geq -1
\times \AQValStar((s',a')) \geq -1
\times  \AValStar((s'))$, thus
\begin{align*}
\DValStar((s,a,i+1)) \geq \\ \DRe((s,a,i+1)) + \\\textrm{max}_{a_{\Defender}}(\sum_{(s',a',i)}(\DTran((s,a,i+1)), a_{\Defender}, (s',a',i))\times \\ -1 \times \AValStar(s'))
\end{align*}

We now have two possible candidates for $a_{\Defender}$ with the maximum value. Either it is a deterministic selection of an outcome, or it is a no-op action. If it is the former, the Bellman operator would select the state with the lowest  $\AValStar$, say $\Bar{s}$, thus we have
\begin{align*}
    \DValStar((s,a,i+1)) \geq -1 \times (\ARe(s) +   \AValStar(\Bar{s}))\\ \geq -1 \times \AQValStar(s,a)
\end{align*}
where, 
\begin{align*}
    \AQValStar(s,a) = \ARe(s) + \sum_{\hat{s}}   \AValStar(\hat{s}))
\end{align*}
The fact 
\[-1 \times(\ARe(s) +   \AValStar(\Bar{s})) \geq -1 \times \AQValStar(s,a)\] follows from the fact that
\[\ARe(s) +   \AValStar(\Bar{s}) \leq \AQValStar(s,a)\]
which, in turn, follows from the fact that the lowest value of $\AValStar$ would be less than the weighted sum of the $\AValStar$ for all other reachable states. This proves that the required relation holds when the action that provides the maximum value is an outcome selection action. In the case of a no-op action, the Bellman equation for $\DValStar((s,a,i+1))$ directly maps to $-1\times\AQValStar((s,a))$, as the action results in the same transitions that would have occurred in the attacker model. This proves the original result.
\end{proof}

Note that for an attacker starting from a given state $s$ and performing an action $a$, the expected value the attacker would obtain would be upper bounded by $|\DValStar((s,a,K))|$. This follows intuitively from the fact that for all non-terminal states, the reward associated with a state in $\DModel$ is just the negative of the reward received by the attacker. For a terminal state, if it is a trap state, the attacker would in fact receive a reward of zero.
If the terminal state corresponds to a state where the budget is zero, then the defender receives the negative of the optimal $Q$-function value for the action taken by the attacker. This is an upper bound on the value the attacker could receive from that state onward.
The above proposition establishes that an optimal policy derived using the above formulation, will never result in the attacker receiving a higher value. In fact, as we will see in the evaluation section, our method is generally very effective at reducing the total expected value received by the attacker.

\subsection{Budget}
\label{sec:budget}
The previously discussed formulation will never generate a state action sequence that is not possible under the model used by the attacker. In theory, the attacker would not be surprised by the outcomes observed, regardless of the choices made by the defender. However, our choice to restrict the defender’s actions to the first $K$ steps is a reflection of the fact that this would hardly ever be true in practice. A more realistic assumption is that the attacker may maintain some uncertainty regarding their own beliefs about the environment model. To approximate how the attacker’s beliefs about the model may evolve, we will assume that the attacker is a Bayesian reasoner, an assumption that would fit well even when the attacker may be human. Under this approximation, the attacker will maintain two hypotheses about the environment: either the environment model is the same as $\AModel$ or that they do not understand the system dynamics at all. Effectively, the second hypothesis  corresponds to a case where the human believes the underlying model could be anything other than the one specified by $\AModel$. This is closely related to problems like new species induction \cite{zabell1992predicting} and open-world reasoning \cite{senator2019science}. However, as works like \cite{Sreedharan0SK21} have shown, one could approximate this by using a high entropy model as the basis for this second hypothesis. In this second hypothesis, the human believes that transition to any state is possible from any given state when any action is executed.  Keeping with the previous literature, we will represent this model as $\MZ= \langle \AStates, \AActs,T^0,R^0, \ADisc \rangle$, such that for all $s,s' \in  \AStates$ and $a \in \AActs$, we have
\[T^0(s,a,s')  = \frac{1}{|\AStates|} \] 

Since we are interested in worst-case scenarios, we will assume a truly unbiased attacker whose belief about the model is equally distributed across these two hypotheses. Note that assuming that the attacker is an agent that is operating under this uncertainty does not shift the policy of the attacker from what was defined in earlier sections, as under $\MZ$ all policies are equivalent. As such, the attacker's actions are determined by $\AModel$.

The objective of the defender would be to prevent the attacker from placing more weight on the hypothesis that they are incorrect about the model. If they start believing that they could be wrong about the model before they reach a trap state, they could take corrective or evasive actions.
One could adopt a belief MDP style formulation \cite{KaelblingLC98}, where we track the exact belief of the attacker. However, the setting provides the opportunity to leverage more efficient formulations, which still guarantees the defender does not tip the attacker’s belief about whether their understanding of the underlying model is incorrect.
The budget represents, in the worst case, the minimal number of transitions that can occur from the initial state, wherein the attacker's beliefs may tip over to $\MZ$. More formally, we can now define the budget $K$ as follows:

\begin{defn}
For a given pair of attacker models $\AModel$ and $\MZ$, the defender action budget is given as $K$, if 
\begin{enumerate}
\item There exists a trajectory $\tau=\langle s_0,a_0,....,s_K \rangle$ of length $K$ (i.e., $|\tau| = K$), such that $s_0 = S^{\Attacker}_0$, $P(\tau|\AModel) > 0$, and $P(\tau|\AModel) < P(\tau|\MZ)$
\item And, there does not exist another trajectory  $\tau'$ of length less than $K$ that meets the same conditions.
\end{enumerate}
\end{defn}
That is, $K$ corresponds to the shortest possible trajectory that is still possible under $\AModel$, which may be better explained by $\MZ$ than \AModel. By restricting the defender's actions to a length less than $K$, the defender would never be able to select a path to a trap state that could potentially tip the beliefs of the attacker. (This even includes potential worst-case transitions that could occur in states where the defender chooses to use no-op actions.)


Now the question is how to calculate this budget $K$. A simple way to do this is to turn it into a uniform-cost search problem. Specifically, we will start the search from the initial state. At any state, the search can select one of the many transitions possible from that state, always expanding the prefix with the shortest length and ending the search when we have a trace that is more likely in \MZ~ than \AModel. There may be cases where such an upper bound does not exist or is too long to be of any practical importance to the defender. So we can stop the search, if we know that $K$ is going to be above some pre-specified limit. We can find this limit by solving the defender MDP ($\DModel$) without budget restrictions (and changing the reward function so it only reflects the number of steps needed before the attacker reaches a trap state) and looking at the optimal policy. This gives us the expected discounted number of steps the defender would need to get the optimum outcome, and if the $K$ is above that limit, then it does not really matter. If we are additionally guaranteed that at least one trap state is reachable from every state in the MDP, we can set the discount factor to $1$ to get a more accurate estimate of the steps.

\section{Empirical Evaluation}

\begin{table*}[!t]
\centering
  \begin{tabular}{|r|c|c|c|c|}
    \toprule
    \multicolumn{2}{|c|}{Problem Instance} &
     Average Value of the attacker &
    Average Value of the defender & Average Defender  \\[0ex]
   
Task      & Instance&&&Planning Time (s)\\
          \midrule
    \multirow{4}{*}{Gridworld} &$4\times4$& 0.94&-0.32&0.46\\ 
&$6\times6$&0.89&-0.43&5.19\\
&$8\times8$&0.83&-0.33&74.68\\
&$9\times9$&0.83&-0.24&94.55\\
    \bottomrule
 \multirow{4}{*}{Four Rooms} &$4\times4$&0&0&0.013\\
&$6\times6$&0.85&-0.24&21.35\\
&$8\times8$&0&0&1.085\\
&$9\times9$&0.78&-0.48&62.51\\
    \bottomrule
 \multirow{4}{*}{Rock Sampling} &$4\times4$& 735.46&0&14.87\\ 
&$6\times6$&868.91&0&272.68 \\
&$8\times8$&-&-&-\\
&$9\times9$&-&-&-\\
    \bottomrule
 \multirow{4}{*}{Puddle}&$\delta=0.2$&529.08&0&0.14\\       
&$\delta=0.3$&534.42&0&0.07\\           
&$\delta=0.4$&547.72&0&0.04\\   
&$\delta=0.5$&547.72&0&0.04\\  
    \bottomrule
 
  \end{tabular}
\vspace{2pt}
\caption{
Empirical evaluation of the proposed algorithm on a number of standard MDP benchmarks}
\label{tab1}
\end{table*}
Our primary focus with these evaluations is to evaluate the computational characteristics of our proposed method on some standard MDP planning benchmarks. We chose some popular variations of a grid world domain to see how well our method performs as we vary the size and trap locations. We considered the following domains.
\begin{enumerate}
    \item Gridworld: This is the basic grid world domain. The agent’s goal is to navigate from a fixed initial state to a goal position. Its path to the goal may be blocked by walls and cells with lava pits. We tested our method on four different instances of grid world, with sizes of $4\times4$, $6\times6$, $8\times8$ and $9\times9$. For each grid instance, we tested our method five times using traps states generated at different random locations. The stochasticity in this domain is controlled via a slipping probability that causes the agent to move in a different direction with a small probability.
    \item Four Rooms: A popular variation of the grid world, in which the agent must navigate through different rooms by using the door connecting the rooms. This domain is particularly popular within the context of hierarchical RL literature \cite{sutton1999between}. As in the previous case, we used four instances of sizes  $4\times4$, $6\times6$, $8\times8$ and $9\times9$. Each instance was again tested on five randomly generated trap states. Interestingly, we found that the specific implementation of the four rooms problem generator we used created instances where the attacker could not reach the goal location for mazes of sizes  $4\times4$ and $8\times8$. The source of the stochasticity of the domain remains the same as the original grid world.
    \item Next, we considered an MDP version of the rock sampling domain introduced by \Shortcite{smith2012heuristic}. The agent takes the form of a rover that needs to go around collecting good rock samples from various points of the grid. We again consider grids of the same four sizes. We tested each instance five times on randomly generated traps. The source of the stochasticity of the domain remains the same as the original grid world. The original domain had a negative reward for sampling bad rocks, which we skipped for this experiment.
    \item Finally, we consider a puddle domain \cite{boyan1994generalization}, in which the agent needs to reach a goal state while avoiding water puddles. Our implementation used a fixed map of size $1\times1$ but allowed variations in the size of the step being taken by the agent. We looked at step sizes ($\delta$) of 0.2, 0.3, 0.4, and 0.5. Each step size was again tested five times over randomly selected trap positions. The source of the stochasticity was again the probability of slipping. In the original task definition, moving through a puddle causes a negative reward. However, to ensure that the agent only receives non-negative value, we added a positive reward for all actions that occur outside the puddle, while actions within the puddle received a reward of zero. While this could change the optimal behavior of the agent, it still remains a viable test domain as the defender could still use the stochasticity of the domain to lead the agent to specific trap states.
\end{enumerate}
   We used value iteration as our planning method and used the implementation provided by the \texttt{Simple\_RL} framework \cite{abel2019simple_rl}. We also used the task implementation provided by the framework, with minimal changes made to ensure the attacker’s reward was always positive and the source of the stochasticity was symmetric across the domains. We used a uniform cost search for the calculation of the defender budget. To keep the budget calculation simple, we kept an upper bound of 15 steps on the defender budget. If the search looked at paths of length higher than 15 steps, the search simply returned 15 as the defender budget. While this places a simple limit on the possible defender budget, ensuring the budget search ends in finite time, one could also calculate task-specific budget limits using the method discussed in Section \ref{sec:budget}.

   
   Table \ref{tab1} presents a summary of the results. All reported values are averaged across five instances. The slip probability for all instances was kept at 0.5. All experiments were run with a timeout of 30 minutes. The system used for testing was a 2GHz Quad-Core Intel Core i5, with 16 GB RAM, 3733 MHz, and LPDDR4X memory.

The primary points of comparison are the value obtained by the attacker if no defender were present and the value obtained by the defender. When the defender intervenes, the effective value obtained by the attacker is equal to the absolute value of the defender value. Our method decreases this value in almost every case. The only two instances that do not hold to this pattern are the ones where the attacker value is already zero. 

The instances with values equal to zero are ones in which the defender policy can guarantee that the attacker will be led to a trap state. We see this primarily in the rock sampling domain and the puddle domain. The defender's ability to induce policies that guarantee the attacker will be trapped depends on multiple factors, including the reachability of the trap state from the attacker's starting position, the possible policies that may be followed by the attacker, the potential stochasticity in the domain (so the defender can mask its actions), and the budget available to the defender. The domains where we saw more frequent defender policies that guaranteed entrapment were those with fewer obstacles that the attacker could not cross. In the case of the puddle, even the puddle does not prevent the agent from moving through there, but only causes a smaller reward. Similarly, in the case of the rock sampling domain, the agent is free to move through the map.

Another factor we were interested in measuring was the time taken by the defender to produce a policy. In theory, the policy generated by the defender would consider all possible behavior that the attacker could exhibit. In our current formulation, the defender planning time is not a major bottleneck. Yet it is still an interesting question to study, as our method relies on a complex compilation which considerably increases the state space of the problem instance. The results show that, for most cases, the system was able to generate policies within a reasonable amount of time for all domains except Rock Sampling. For Rock sampling, our approach timed for grids of sizes $8\times 8$ and $9\times 9$. However, the current planning framework we use (i.e., \texttt{Simple\_RL}) adds overhead to our approach, which may be avoidable in many MDP planning settings. \texttt{Simple\_RL} framework focuses on using a generative specification of the task and performs a separate sampling process to create an estimate of the explicit transition function. In cases where the model is given upfront, we could avoid the need to perform such model estimation operations.
As expected, for the problems where the planner succeeds, we see that the time taken increases with the size of the problem. Specifically, the defender model’s construction grows polynomially with the size of the state space. In the case of the puddle domain, it is worth noting that the effective size of the state space increases with a step size. So, the problem corresponding to step size $\delta=0.2$ corresponds to the largest problem and $\delta=0.5$ the smallest one. Table \ref{tab2} presents a breakdown of the time taken to construct the model and plan for the maximum environment size used in each domain (Gridworld: 9x9, Four Rooms: 9x9, Rock Sampling: 6x6, Puddle: 0.2), given a budget of 15. The system used for these tests was a 2.4GHz quad-core Intel Core i7 processor with 4GB 1333MHz DDR3 memory. While our current implementation is compatible with offline decision-making, a future goal for the project could be the use of a faster online planner to generate defender strategies. The code used for all experiments can be found at https://github.com/blcates/AttackerEntrapment.

\begin{table}[htbp]
    \begin{center}
        \begin{tabular}{|c|c|c|}
            \toprule
            Domain & \multicolumn{2}{|c|}{Time Taken (s)} \\
            \textbf{ }  & Model Construction & Planning \\
                \bottomrule
             Gridworld & 22.365 & 44.694\\
                \bottomrule
            Four Rooms & 14.927 & 35.693\\
                \bottomrule
            Rock Sampling & 23.774  & 5.431\\
                \bottomrule
            Puddle &  131.280  & 176.203\\
    \bottomrule
        \end{tabular}
    \end{center}
    \caption{Model construction and planning time}
    \label{tab2}
\end{table}

\section{Related Work}
Much of the literature \cite{liu2021deceptive, masters2021extended,masters2020characterising,masters2017deceptive} around deceptive path planning involves an agent obfuscating its goal or path in order to mislead a passive observer. The defender described in our method turns this problem on its head. It is instead, an active observer attempting to deceive another agent. In their survey of agent interpretability, \Shortcite{chakraborti2019explicability} speculate that a fully active observer could have its “own goals and actions, with the ability to even assist or impede the agent from achieving its goals”. The latter is precisely what is achieved by our defender. To the best of our knowledge, this is the first work to implement an active observer agent whose influence and indeed, the very existence remains unknown to the agent it is deceiving. \looseness=-1

Deception \cite{bowyer1982cheating} requires that the deceiver must be able to influence the target’s actions, and it must be possible for the target to misread its situation \cite{davis2016deception}. \Shortcite{masters2017deceptive} makes a distinction between deception which uses simulation, and that which uses dissimulation. Simulation is generally described as ``showing the false.'' Whereas dissimulation, by contrast, is described as ``hiding the real.'' Put another way, telling a lie is a form of simulation, while a lie of omission is a form of dissimulation. The defender uses only dissimulation. It does not explicitly convey false information, such as convincing the attacker that the target is in a different location. It simply allows the attacker to maintain the false belief that the environment is not being controlled by an unseen adversary.

The defender can be further classified using the more specific categories of AI deception established by \Shortcite{masters2021extended}. It employs a strategy of calculating deception, in which the deceiver possesses more knowledge about the environment and exploits this knowledge to gain an advantage over the opponent. In this case, the asymmetry of knowledge is awareness of the defender’s existence and the presence of trap states. Unbeknownst to the attacker, the defender guides the attacker’s movements toward the equally unknown trap states.

It is useful to compare the defender’s method with those of other deceptive path-planning agents. \Shortcite{kulkarni2019unified} chose obfuscated paths for adversarial path planning scenarios  by conducting a search over nodes that represent each state along with the belief state of the adversary. Paths were chosen, in part, based on how they affected the  adversary’s belief state. Specifically, the planning agent aims to choose a path that does not make its goal known to an adversarial observer \cite{kulkarni2018resource, keren2016privacy,bernardini2020optimization,10.5555/3398761.3399017}. Likewise, our defender agent calculates its budget, and the number of actions it can take, by finding the shortest path to a trap state which does not violate the attacker’s model of the environment’s rules. 

Comparisons can also be drawn to \Shortcite{ Chakraborti2015serendipity} and \Shortcite{ Fern2014assistance}. The planning agents in both of these works consider the humans in their environment to be unaware of the agents’ existence.  Unlike the defender, these agents operate in cooperative scenarios. Thus, the human’s ignorance is not a necessary feature of the task. In \Shortcite{Fern2014assistance}, it is a simplifying assumption that the authors use to generate more effective solutions. Whether the case is collaborative or adversarial, the result of assuming ignorance of the AI’s existence is that the naïve agent does not alter its decisions in response to the AI’s presence. This simplification is what allows the attacker entrapment problem to be solved as an MDP, rather than as a more complex game-theoretic problem. In future work, it could be interesting to compare the defender's performance to that of comparable models which rely on game-theoretic solutions \cite{brafman2009planning,speicher2018stackelberg,tambe2011security}.

\section{Conclusion and Discussion}

The primary contribution of this paper has been the introduction of the {\em attacker entrapment problem}, in which an attack is undermined by a covert defender. Both agents, each represented using a Markov Decision Process, select actions that maximize their value. The defender maximizes value by leading the attacker to choose actions that minimize its own value, ideally ending in a trap. By only choosing actions that do not alert the attacker to its presence, the defender is able to exploit its opponent’s ignorance of the environment, thereby thwarting the unwitting interloper. 

By demonstrating this interaction in four MDP benchmark domains, we have shown that a defender agent like the one described can effectively reduce the value obtained by an attacker. We have also presented a method for calculating a pessimistic lower bound on the number of intervening actions that a defender can take without altering an attacker’s belief about the environment. 

Future iterations of this research could investigate different planning strategies. 
Our current method calculates every possible state for every possible attacker action, which would be prohibitively slow for a large state space. 
As a next step, we hope to investigate the utility of online and approximate planners, such as variants of Monte Carlo Tree Search based planners. Online planning capabilities would make our approach more widely applicable in a variety of adversarial scenarios.

Past works have framed deceptive path planning in terms of an agent deceiving a passive observer \cite{kulkarni2019unified, ornik2018control}. The observer does not know the agent’s goal or plan, and it must continually update its own beliefs by inferring intent from the opponent’s actions. An interesting problem for future research would be to replace the passive observer in these scenarios with our defender agent. The defender could be modified to maintain an estimation of the attacker’s belief state, which can be updated based on the attacker’s actions. The defender could still conceal its presence, but the ability to handle evolving attacker beliefs would make it more robust if its presence became known. The defender could even make choices that risk violating the attacker’s concept of the environmental rules if doing so would ultimately lead to a more desirable outcome.

Such changes lead us toward a more realistic mental model for potential human attackers in a real-world setting. Another step toward a defender that could contend with a human attacker would be the exploitation of cognitive biases. Though humans often think of themselves as rational beings, they are not purely rational, and, in fact, their rationality fails in predictable ways. These predictable failings, called cognitive biases, have been categorized and well-documented in psychological literature. In future research, we plan to determine how common cognitive biases can be represented and detected in planning problems. A defender which could take advantage of these weaknesses would be equipped to deceive an opponent using more clever and subtle strategies.

It is our hope that the concepts we have introduced will provide fruitful avenues for future research. Through this simple example, we have demonstrated the potential utility of a covert defender in adversarial path-planning tasks. We are eager to further investigate this line of research and look forward to the new problems it will expose us to.
\section*{Acknowledgements}
\noindent The authors would like to thank Dr. Indrajit Ray for helpful discussions that contributed to the inception of this project.

\bibliography{bib}

\end{document}